\documentclass{article}

     \PassOptionsToPackage{numbers, compress}{natbib}



     \usepackage[final]{neurips_2022}


\usepackage[utf8]{inputenc} 
\usepackage[T1]{fontenc}    
\usepackage{hyperref}       
\usepackage{url}            
\usepackage{booktabs}       
\usepackage{amsfonts}       
\usepackage{nicefrac}       
\usepackage{microtype}      
\usepackage{xcolor}         

\usepackage{amsmath}
\usepackage{amssymb}
\usepackage{mathtools}
\usepackage{amsthm}
\usepackage{dsfont}

\usepackage[capitalize,noabbrev]{cleveref}
\usepackage{adjustbox}
\usepackage{graphicx}

\usepackage{comment}
\newcommand{\specialcell}[2][c]{%
  \begin{tabular}[#1]{@{}c@{}}#2\end{tabular}}

\theoremstyle{plain}
\newtheorem{theorem}{Theorem}[section]

\newtheorem{lemma}[theorem]{Lemma}

\theoremstyle{definition}

\theoremstyle{remark}

\newcommand{\indep}{\perp \!\!\! \perp}


\title{Discovery of Single Independent Latent Variable}


%

\author{%
  Uri Shaham \\
  Department of Computer Science\\
  Bar-Ilan University \\
   Ramat Gan, Israel \\
  \texttt{uri.shaham@biu.ac.il} \\
   \And
   Jonathan Svirsky \\
   Faculty of Engineering\\
   Bar-Ilan University \\
   Ramat Gan, Israel \\
   \texttt{svirskj@biu.ac.il} \\
   \And
   Ori Katz \\
   Electrical and Computer Engineering\\
   Technion \\
   Haifa, Israel \\
   \texttt{orikats@campus.technion.ac.il} \\
   \And
   Ronen Talmon \\
   Electrical and Computer Engineering\\
   Technion \\
   Haifa, Israel \\
   \texttt{ronen@ee.technion.ac.il} \\
}

\begin{document}

\maketitle

\begin{abstract}
Latent variable discovery is a central problem in data analysis with a broad range of applications in applied science.
In this work, we consider data given as an invertible mixture of two statistically independent components, and assume that one of the components is observed while the other is hidden. Our goal is to recover the hidden component.
For this purpose, we propose an autoencoder equipped with a discriminator.
Unlike the standard nonlinear ICA problem, which was shown to be non-identifiable, in the  special case of ICA we consider here, we show that our approach can recover the component of interest up to entropy-preserving transformation.
We demonstrate the performance of the proposed approach in several tasks, including image synthesis, voice cloning, and fetal ECG extraction. 
\end{abstract}

\section{Introduction}

The recovery of hidden components in data is a long-standing problem in applied science. This problem dates back to the classical PCA \cite{pearson1901liii,hotelling1933analysis}, yet it has numerous modern manifestations and extensions, e.g., kernel-based methods \cite{scholkopf1998nonlinear}, source separation~\cite{comon2010handbook, belouchrani1997blind}, manifold learning \cite{tenenbaum2000global,roweis2000nonlinear,belkin2003laplacian,coifman2006diffusion}, and latent Dirichlet allocation \cite{blei2003latent}, to name but a few.
Perhaps the most relevant line of work in the context of this paper is independent component analysis (ICA) \cite{hyvarinen2000independent}, which attempts to decompose an observed mixture into statistically independent components.

Here, we consider the following ICA-related recovery problem. Assume that the data is generated as an invertible mixture of two (not necessarily one dimensional) independent components, and that one of the components is observed while the other is hidden. In this setting, our goal is to recover the latent component.
At first glance, this problem setting may seem specific and perhaps artificial. However, we posit that it is in fact broad and applies to many real-world problems.

For example, consider thorax and abdominal electrocardiogram (ECG) signals measured during labor for the purpose of determining the fetal heart activity. In analogy to our problem setting, the abdominal signal can be viewed as a mixture of the maternal and fetal heart activities, the maternal signal can be viewed as an accurate proxy of the maternal heart activity alone, and the fetal heart activity is the hidden component of interest we wish to recover.    
In another example from a different domain, consider a speech signal as a mixture of two independent components: the spoken text and the speaker identity. 
Arguably, the speaker identity is associated with the pitch and timbre, which are independent of information about the textual content, rhythm and volume.
Consequently, recovering a speaker-independent representation of the spoken text facilitates speech synthesis and voice conversion.

In this paper, we present an autoencoder-based approach, augmented with a discriminator, for this recovery problem. First, we theoretically show that this architecture of solution facilitates the recovery of the latent component up to an entropy-preserving transformation. Second, in addition to the recovery of the latent component (the so-called analysis task), it enables us to generate new mixtures corresponding to new instances of the observed independent component (the so-called synthesis task).  
Experimentally, we show both analysis and synthesis results on several datasets, consisting of simulated and real-world data. In particular, we demonstrate the proposed approach on ECG analysis, image synthesis, and voice cloning tasks.

Our contributions are as follows.
(i) We propose an easy-to-train mechanism for extraction of a single latent independent component. (ii) We present a simple proof for the ability of the proposed approach to recover the latent component. 
(iii) We experimentally demonstrate the applicability of the proposed approach in the contexts of both analysis and synthesis tasks. Specifically, we show applications to real-world data from different fields.



\section{Related Work}\label{sec:related}

The problem we consider in this work could be viewed as a simplified case of the classical formulation of nonlinear ICA. 
%
Several algorithms have been proposed for recovery of the independent components, assuming that (i) the mixing of the components is linear, and (ii) the components (with a possible exception of one) are non-Gaussian; this case was proven to be identifiable~\citep{comon1994independent, eriksson2004identifiability}.
The nonlinear case, however, i.e., when the mixing of the independent components is an arbitrary invertible function, was proven to be non-identifiable in the general case~\citep{hyvarinen1999nonlinear}.

{\bf{Identifiable nonlinear ICA.}}
%
\citet{hyvarinen2019nonlinear} have recently described a general framework for identifiable nonlinear ICA, generalizing several earlier identifiability results for time series, e.g.,~\citep{hyvarinen2016unsupervised, hyvarinen2017nonlinear, sprekeler2014extension}, in which in addition to observing the data $x$, the framework requires an auxiliary observed variable $u$, so that conditioned on $u$, the latent factors are independent (i.e., in contrast to being marginally independent as in a standard ICA setting). 
The approach we propose in this work falls into this general setting, as we assume that the auxiliary variable $u$ is in fact one of the latent factors, which immediately satisfies the conditional independence requirement. 
For this special case we provide a simple and intuitive recovery guarantee.

Following works have recently extended the framework of~\citet{hyvarinen2019nonlinear} to generative models~\citep{khemakhem2020variational}, unknown intrinsic problem dimension~\citep{sorrenson2020disentanglement}, and multiview setting~\citep{gresele2020incomplete}.
With respect to iVAE~\citep{khemakhem2020variational}, we allow for the recovery of high-dimensional components, whereas in iVAE, only one-dimensional components are recovered. In addition, our work presents several important differences: (i) we formulate our recovery guarantee result in terms of entropy-preserving map rather than statistical identifiability. (ii) It allows for a compact proof of the recovery guarantee. (iii) We present experiments on real-world data and comparisons to leading methods per application domain, whereas in~\citep{khemakhem2020variational}, the iVAE approach was mostly demonstrated on simulated data.

{\bf{Disentangled representation learning methods.}}
While the main interest in ICA has originally been for purposes of analysis (i.e., recovery of the independent sources from the observed data), the highly impressive achievements in deep generative modeling in recent years have drawn much interest also to the direction of data synthesis (e.g., images) from independent factors. In the research community, this direction is often termed learning of disentangled representations, i.e., representations in which modification of a single latent coordinate in the representation affects the synthesized data by manipulating a single perceptual factor in the observed data, leaving other factors unchanged.
In a similar fashion to the ICA case, the task of learning disentangled representations in the general case was proved to be non-identifiable~\citep{locatello2019challenging}.
Several methods for learning disentangled representations have been recently proposed, most of which are based on a variational autoencoder (VAE, ~\citet{kingma2013auto}) formulation, and decompositions of the VAE objective, for example~\citep{higgins2016beta, kim2018disentangling, chen2016infogan, chen2018isolating, kumar2017variational, burgess2018understanding, esmaeili2019structured}.
GAN-based approaches for disentanglement have been proposed as well~\citep{chen2016infogan, brakel2017learning}.



{\bf{Domain confusion.}} Our proposed approach is based on the ability to learn an encoding in which different conditions (i.e., states of the observed factor) are indistinguishable. 
Such a principle has been in wide use in the machine learning community for domain adaptation tasks~\citep{ganin2016domain}.
A popular means to achieve this approximate invariance is via discriminator networks, for example, in~\citep{lample2017fader, mor2018universal, shaham2018batch, nachmani2019unsupervised}, where the high-level mechanism is similar to the one proposed in this work, although the specifics are different.

We remark that while the algorithm we propose here has been presented before, e.g., in the context of singing voice conversion \cite{nachmani2019unsupervised}, to the best of our knowledge, it was not discussed in the context of latent independent component recovery, and no identifiability results were shown.

\section{Problem Formulation}
\label{sec:setup}
Let $X \in \mathbb{R}^d$ be a random variable. 
We assume that $X$ is generated via $X = f(S,T)$, where $f$ is an unknown invertible function of two arguments, and $S\in\mathcal{S}$ and $T\in\mathcal{T}$ are two random variables in arbitrary domains, satisfying $S \indep T$. We refer to $S$ as the \textit{unobserved source} that we wish to recover, to $T$ as the \textit{observed condition}, and to $X$ as the \emph{observed input}.

Let $\{(s_i, t_i) \}_{i=1}^n$ be $n$ realizations of $ S\times T$, and for all $i=1,\ldots, n$, let $x_i = f(s_i, t_i)$. Given only input-condition pairs, i.e., $\{(x_i, t_i) \}_{i=1}^n$, we state two goals.
First, from analysis standpoint, we aim to recover the realizations $s_1,\ldots s_n$ of the unobserved source $S$.
Second, from synthesis standpoint, for any (new) realization $t$ of $T$, we aim to to generate an instance $x = f(s_i,t)$ for any $i=1,\ldots,n$.



\section{Autoencoder Model with Discriminator}\label{sec:proposed}
To achieve the above goals, we propose an autoencoder (AE) with a discriminator. The AE model is denoted by $(E,D)$, where the encoder $E$ maps inputs $x$ to codes $s'$ (analysis), and the decoder $D$ maps code-condition tuples $(s', t)$ to input reconstructions $\hat{x}$ (synthesis), i.e., $x \xmapsto{E} s'$ and $(s',t) \xmapsto{D} \hat{x}$.
The discriminator, denoted by $g(\cdot)$, maps codes $s'$ to predicted conditions $\hat{t}$\footnote{This formulation of the discriminator does not capture all scenarios, however we use it here for simplicity. In \Cref{subsec:train_obj}, we describe additional implementations of the discriminator.}.
%


 We use two objective terms in a GAN-like minimax game: 
  \begin{equation}\label{eq:gan_minmax}
 \min_{E,D} \max_g  \left[ \mathcal{R}\left(x, D(E(x),t)\right) -\lambda \mathcal{I}\left(t, g(E(x))\right)\right].
\end{equation}
The first objective, denoted by $\mathcal{R}$, measures the discrepancy between an input $x$ and its reconstruction $\hat{x}=D(E(x),t)$. The second objective, denoted by $\mathcal{I}$, quantifies the independence between the condition $T$ and code $S'$ via a prediction $\hat{t}=g(s')$. 
In order to maximize $-\mathcal{I}\left(t, g(E(x))\right)$, the discriminator aims at leveraging any information in the code $S'=E(X)$ on the condition $T$ (via a learned function $g$) to obtain an accurate prediction $\hat{t}=g(s')$. 
In order to minimize both $\mathcal{R}\left(x, D(E(x),t)\right)$ and $-\mathcal{I}\left(t, g(E(x))\right)$, the autoencoder aims at reconstructing the input from the code $s'$ and the condition $t$, while failing the discriminator. We will show formally and empirically, that this results in an equilibrium in which $S'$ does not contain any information on $T$ and contains all the remaining information in $X$.
Our approach is illustrated in Figure~\ref{fig:proposed}.

\begin{figure}
  \centering
    \includegraphics[width=.55\columnwidth]{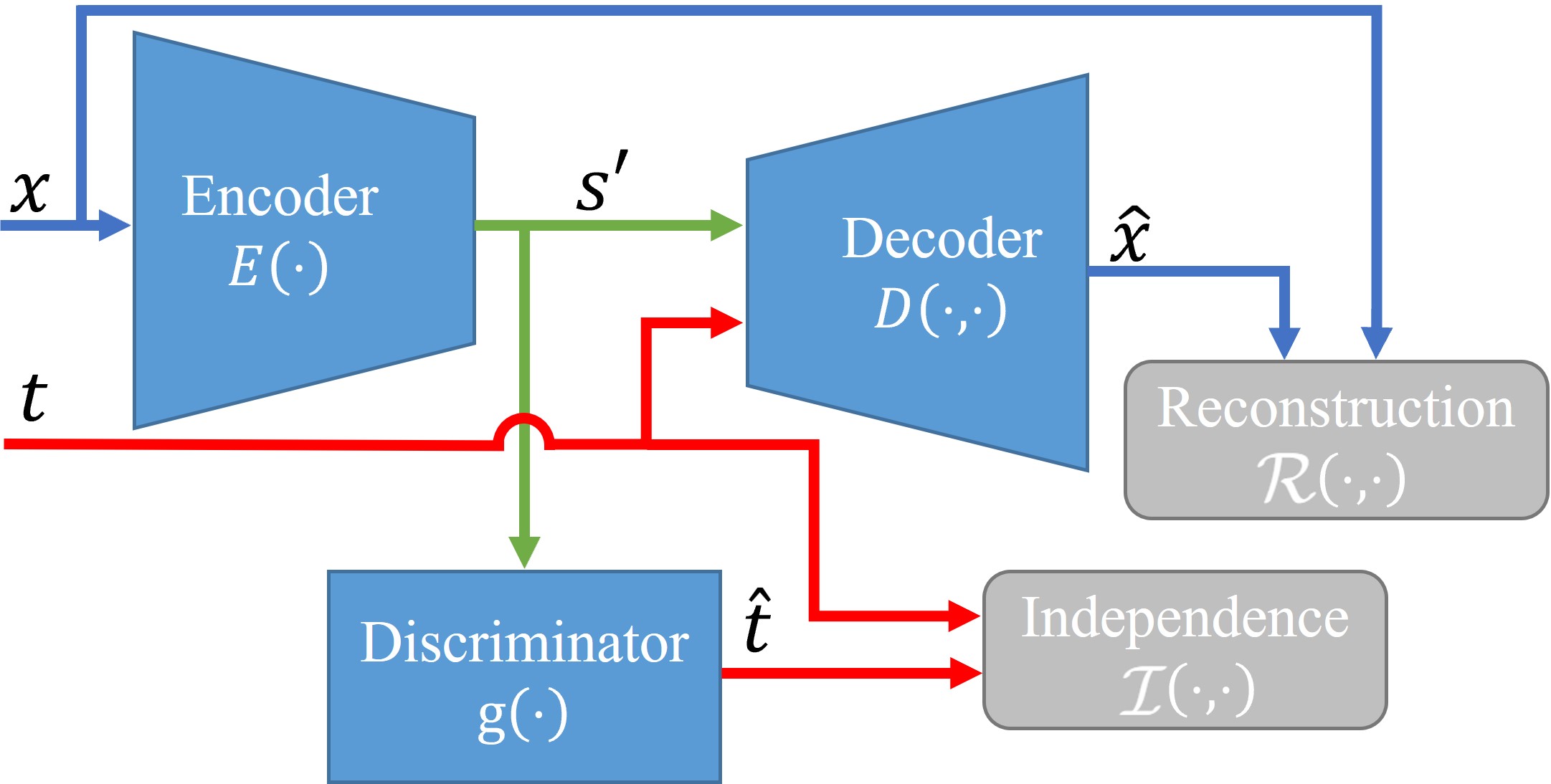}
    \caption{A diagram of the proposed approach. Learned functions are colored in blue, while the objective functions are colored in gray.}
    \label{fig:proposed}
\end{figure}


\subsection{Recovery of the Latent Component}\label{sec:identifiability}
Let $S'=E(X)$ be a random variable representing the encoder output, and let $\hat{X} = D(S', T)$ be a random variable representing the decoder output.

Lemma~\ref{lemma} establishes that when the autoencoder is trained to perfect reconstruction and the learned code is independent of the condition, the learned code contains the same information as $S$, thereby proving that the latent component of interest can be recovered, up to an entropy-preserving transformation.
The lemma is stated assuming $S$ is discrete and in terms of mutual information $I(\cdot ; \cdot)$ and entropy $H(Y) := -\sum_{y \in \mathcal{Y}} p(y)\log p(y)$, where $Y$ is a random variable with density $p$, taking values in $\mathcal{Y}$.
An equivalent result for the case of continuous $S$ can be obtained by replacing the entropy term with the limiting density of discrete points~\cite{jaynes1957information} $H(Y) := -\int_{y \in \mathcal{Y}} p(y)\log \frac{p(y)}{m(y)}dy$, where $m(\cdot)$ is the limiting density.

\begin{lemma}\label{lemma}
Suppose we train the autoencoder model to zero generalization loss, i.e., $\hat{X} = X$, and impose on the code that $S' \indep T$. 
Then $I(S;S'|T) = H(S) = H(S')$.
\end{lemma}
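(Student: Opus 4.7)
The plan is to first translate both hypotheses (invertibility of $f$, perfect reconstruction) into clean functional relations, and then read the entropy equalities off using the two independence assumptions ($S\indep T$ from the data-generating model, $S'\indep T$ imposed by the discriminator).

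First, I would record the two deterministic maps. Since $\hat X = X = D(S',T)$ and $(S,T)=f^{-1}(X)$ by invertibility of $f$, $S$ is a deterministic function of $(S',T)$, i.e.\ $S = h(S',T)$ for some $h$. Conversely, $S' = E(X,T)$ and $X=f(S,T)$, so $S' = \phi(S,T)$ for some $\phi$. These two relations are the only structural content I will need.

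Second, I would push entropies through via conditional mutual information on $T$. From $S = h(S',T)$ one gets $H(S\mid S',T) = 0$, so by the chain rule
$$
I(S;S'\mid T) \;=\; H(S\mid T) - H(S\mid S',T) \;=\; H(S),
$$
where I used $S\indep T$ to replace $H(S\mid T)$ by $H(S)$. On the other hand $I(S;S'\mid T) \le H(S'\mid T) = H(S')$, where the equality uses $S'\indep T$. Combining gives $H(S)\le H(S')$. The symmetric argument, starting from $S' = \phi(S,T)$ (so $H(S'\mid S,T)=0$), yields $H(S')\le H(S)$. Hence $H(S)=H(S')$, which is the ``entropy-preserving'' content of the lemma.

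Third, to upgrade this to $I(S;S')=H(S)$, I would try to show $H(S\mid S') = 0$, i.e.\ that $T$ in the expression $S = h(S',T)$ is effectively redundant once $S'$ is known. This is the step I expect to be the main obstacle, and in fact I suspect it is not true as stated without extra regularity. The obstruction is the standard XOR-type counterexample: take $S,T\in\{0,1\}$ independent fair bits, $X=(S,T)$ (so $f$ is the identity, invertible), $E(X) = S\oplus T$, $D(s',t)=(s'\oplus t,\,t)$. Then $\hat X = X$ and $S'\indep T$, yet $S\indep S'$, so $I(S;S')=0\ne 1 = H(S)$. So to complete the proof I would either need an additional structural assumption on $f$ or on the domain (for instance, some continuity/regularity ruling out such measure-preserving ``masking'' tricks), or I would weaken the conclusion of the lemma to $H(S)=H(S')$, which is what my chain-rule argument does prove cleanly. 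I would flag this clearly and see which of these routes the authors intend.
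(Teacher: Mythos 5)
Your analysis is essentially correct, and the obstruction you flag is real: the conclusion $I(S;S')=H(S)$ does not follow from the stated hypotheses, and your XOR construction is a genuine counterexample. With $S,T$ independent fair bits, $X=(S,T)$, $S'=S\oplus T$ and $D(s',t)=(s'\oplus t,\,t)$, all hypotheses hold ($f$ invertible, $S\indep T$, $\hat X=X$, $S'\indep T$), yet $I(S;S')=0$ while $H(S)=H(S')=1$. The paper's own proof breaks at the line $H(X\mid S',T)=H(S\mid S')$: invertibility of $f$ only gives $H(X\mid S',T)=H(S,T\mid S',T)=H(S\mid S',T)$, and passing from $H(S\mid S',T)=0$ to $H(S\mid S')=0$ requires $I(S;T\mid S')=0$, a conditional independence that is not implied by the two marginal independences $S\indep T$ and $S'\indep T$ and that fails in your example (there $H(S\mid S',T)=0$ but $H(S\mid S')=1$). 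What the DPI argument actually establishes is $I\bigl(S;(S',T)\bigr)=H(S)$, i.e.\ $S$ is recoverable from the pair $(S',T)$, not from $S'$ alone. Your chain-rule derivation of $H(S)=H(S')$ is sound and is the cleaner route to the part of the conclusion that survives: the paper obtains $H(S')\le H(S)$ essentially as you do (from $H(S',T)\le H(X)$ plus the two independences), but its reverse inequality $H(S')\ge I(S;S')=H(S)$ leans on the broken step, whereas your bound $H(S)=I(S;S'\mid T)\le H(S'\mid T)=H(S')$ does not. So the honest statement provable here is $H(S)=H(S')$ together with $I\bigl(S;(S',T)\bigr)=H(S)$; upgrading to $I(S;S')=H(S)$ requires an additional assumption ruling out the XOR-type masking, exactly as you suspect.
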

\begin{proof}
Observe that since $f$ is invertible
\begin{align}
    \label{eq:x_inv}
    H(X) = H(S,T),
\end{align}
and there exist functions $h_1,h_2$ such that $h_1(X) = S,\; h_2(X) = T$. Since $S\indep T$ we have that $H(S,T) = H(S) + H(T)$, combining with \cref{eq:x_inv} we get that
\begin{align}
    \label{eq:x_fac}
    H(X) = H(S) +H(T).
\end{align}

Since $X = \hat{X} = D(S',T)$ we have that
\begin{align}
    \label{eq:x_leq}
    H(X) \leq H(S',T)= H(S')+H(T),
\end{align}
where the second transition is due to the fact that $S' \indep T$.
Combining \cref{eq:x_fac} with \cref{eq:x_leq} we get that 
\begin{align}
    \label{eq:ss'_leq}
    H(S) \leq H(S').
\end{align}

Next, observe that $(S',T)$ is a function of $X$: $(S',T) = \left(E(X), h_2(X)\right)$, hence
\begin{align}
    \label{eq:x_geq}
    H(X) \geq H(S',T)= H(S')+H(T),
\end{align}
where the second transition is due to the fact that $S' \indep T$.
Combining with \cref{eq:x_fac} we get that 
\begin{align}
    \label{eq:ss'_geq}
    H(S') \leq H(S).
\end{align}
Combining \cref{eq:ss'_leq} with \cref{eq:ss'_geq} we get that $H(S) = H(S')$.

Finally, following the definition of conditional mutual information as $I(X;Y|Z) = I(X;(Y,Z)) - I(X;Z)$ and setting $X=S, Y = S', Z = T$ we get
\begin{align}
I(S;S'|T) &= I(S;(S',T)) - I(S;T) \notag\\
&= I(S;(S',T)) \notag \\
&= H(S',T) - H(S',T|S) \notag \\
&= H(S',T) - H(S', T, S) + H(S)\notag \\
&= H(S',T) - H(S', T) + H(S)\notag \\
& = H(S),\notag
\end{align}
where the penultimate transition is due to the fact that $S = h_1(D(S',T))$.
\end{proof}

Lemma~\ref{lemma} has two important consequences. First, it shows that unlike the standard nonlinear ICA problem, the problem we consider here allows for recovery of the latent independent component of interest. More specifically, it proves that when the autoencoder yields perfect reconstruction and condition-independent code, the learned code is a recovery of the random variable $S$, up to entropy-preserving transformation.
Second, the lemma prescribes a recipe for the practical solution we present here.
Requiring the autoencoder to generate accurate reconstruction $\hat{X}$ of $X$ ensures that no information on $S$ is lost in the encoding process. 
Independence of $S'$ and $T$ is achieved implicitly; it results from the equilibrium of the GAN-like minimax game \eqref{eq:gan_minmax}, as the discriminator can benefit from any mutual information between $S'$ and $T$.

\subsection{Training Objectives}\label{subsec:train_obj}
As described above, to obtain a code $S'$ that is independent of the condition $T$, we utilize a discriminator network, aiming to leverage information on $T$ in $S'$ for prediction, and train the encoder $E$ to fail the discriminator, in a standard adversarial training fashion. Doing so pushes the learned codes $S'$ towards being a condition-free encoding of $X$.

We propose to optimize the following objectives of the discriminator and the autoencoder:
\begin{align}
    \label{eq:objecties}
    \mathcal{L}_\text{disc} &= \min_g \mathcal{I}\left(t, g(E(x))\right),\\
    \mathcal{L}_\text{AE} &= \min_{E,D} \left[ \mathcal{R}\left(x, D(E(x),t)\right) -\lambda \mathcal{I}\left(t, g(E(x))\right)\right].
\end{align}
%
The specific reconstruction and independence objective terms are application-dependent. In our experiments, we make use of the following.

{\bf{Reconstruction.}}
We use standard reconstruction loss functions. In the experiments with images, $\ell_1$ and SSIM loss~\cite{wang2004image} (and combinations of these) are used. $\ell_1$ loss is also used in the audio experiments, and MSE loss in the experiments with ECG signals. 

{\bf{Independence.}}
The discriminator computes a map $\hat{t}=g(s')$, where $s'=E(x)$ is the code obtained from the encoder and $\hat{t}=g(s')$ is the condition predicted by the discriminator. The discriminator is trained to minimize the independence term $\mathcal{I}(\hat{t}, t)$ (thus to leverage mutual information in $S'$ and $T$).

When the condition takes values from a finite symbolic set, we train the discriminator as a classifier that predicts the condition class from the code, and we set the independence term to $\mathcal{I}(\hat{t}, t) = \text{Cross Entropy}(\hat{t}, t)$.
This is also known as a Domain Confusion term.
Using this term, the autoencoder is trained to produce codes that maximize the cross entropy with respect to the true condition, and the equilibrium of the game is when this term equals the cross entropy of a random guess.

When the condition and its prediction take numerical values, i.e., $t,\hat{t} \in \mathbb{R}$, we train the discriminator as a regression model and set the independence term to:
     $\mathcal{I}(\hat{t}, t)=-\text{Correl}^2(\hat{t}, t)$.
We remark that this term also equals the negative $R^2$ term of a simple regression model, regressing $t$ on $\hat{t}$. Using this term, the autoencoder is trained to produce codes for which the squared correlation with the condition is minimized. As $\hat{t}$ is a nonlinear function of $s'$ computed via a flexible model such as a neural net, $\left(\text{Correl}(\hat{t}, t)\right)^2 = 0$ implies that $S'$ and $T$ are approximately statistically independent.

In addition, we also successfully train the discriminator in a contrastive fashion, i.e., to distinguish between ``true tuples'' $(s', t)$ that correspond to samples $(s,t)$ from the joint distribution of $S$ and $T$ satisfying $s'=E(x)$ and $x=f(s,t)$, and ``fake tuples'' $(s', t)$, where $s'=E(x)$ but $x=f(s,\tilde{t})$ with $t\neq \tilde{t}$. The contrastive objective is then $\mathcal{I}(s', t) = \text{Cross Entropy}(\hat{l}, l)$,
where $l$ is the ground truth true/fake label and $\hat{l} = g(s', t)$ is the predicted true/fake label made by the discriminator. Note that this implementation deviates from the discriminator formulation we used thus far. Here, the discriminator $g(\cdot,\cdot)$ maps tuples of codes and conditions $(s',t)$ to true/fake labels.

We remark that other possible implementations of the independence criterion can be utilized as well, e.g., nonlinear CCA~\cite{andrew2013deep, michaeli2016nonparametric} and the Hilbert-Schmidt Independence Criterion (HSIC)~\citep{gretton2005measuring}. 

{\bf{Optimizers.}}
Our proposed approach utilizes two optimizers, one for the autoencoder and one for the discriminator. The AE optimizer optimizes $\mathcal{L}_\text{AE}$ by tuning the encoder and decoder weights. The discriminator optimizer optimizes $\mathcal{L}_\text{disc}$ by tuning the discriminator weights (which determine the function $g$). A common practice in training GANs is to call the two optimizers with different frequencies. We specify the specific choices used in our experiments in Appendix~\ref{app:tech}.

{\bf{GAN real/ fake discriminator.}} Optionally, a GAN-like real / fake discriminator can be added as an additional discriminator in order to encourage generating more realistic inputs.
While we have a successful empirical experience with such GAN discriminators (e.g., see Appendix~\ref{app:cycle}), this is not a core requirement of our proposed approach.

\section{Experimental Results}\label{sec:experiments}
In this section, we demonstrate the efficacy of the proposed approach in various settings, by reporting experimental results obtained on different data modalities and condition types, in both analysis and synthesis tasks. We present four applications here and additional two in the appendix.

We begin with a two dimensional analysis demonstration, in which the condition is real-valued.
Second, we demonstrate the utility of our approach for image manipulation, where the condition is given as an image.
Third, the proposed approach is used for voice cloning, which is primarily a synthesis task with a symbolic condition. 
Fourth, we apply our approach to an ECG analysis task, using a real-valued heartbeat signal as the condition.
Additional experimental results in image synthesis are described Appendix~\ref{app:mnist} and~\ref{app:cycle}.

The network architectures and training hyperparameters used in each of the experiments are described Appendix~\ref{app:tech}. In addition, codes reproducing some of the results in this manuscript are available at~\url{https://github.com/shaham-lab/disilv}.

\subsection{2D Analysis Demonstration}
In this example, we first generate the latent representation of the data by sampling from two independent uniform random variables. 
We then generate the observed data via linear mixing. We consider one of the latent components as the condition and train the autoencoder to reconstruct the observed data, while obtaining code which is independent of the condition using the regression objective.
We use $\ell_1$ as a reconstruction term.
The top row in Figure~\ref{fig:2d} shows the latent, observed and reconstructed data, as well as the distribution of the condition and the learned code. 
The bottom row in Figure~\ref{fig:2d} shows the results of a similar setup, except for the mixing which is now nonlinear. As can be seen, the joint distribution of the learned code and the condition is approximately a tensor product of the marginal distributions, which implies that the latent component is indeed recovered.

\begin{figure}[t]
  \centering
    \includegraphics[width=.75\textwidth]{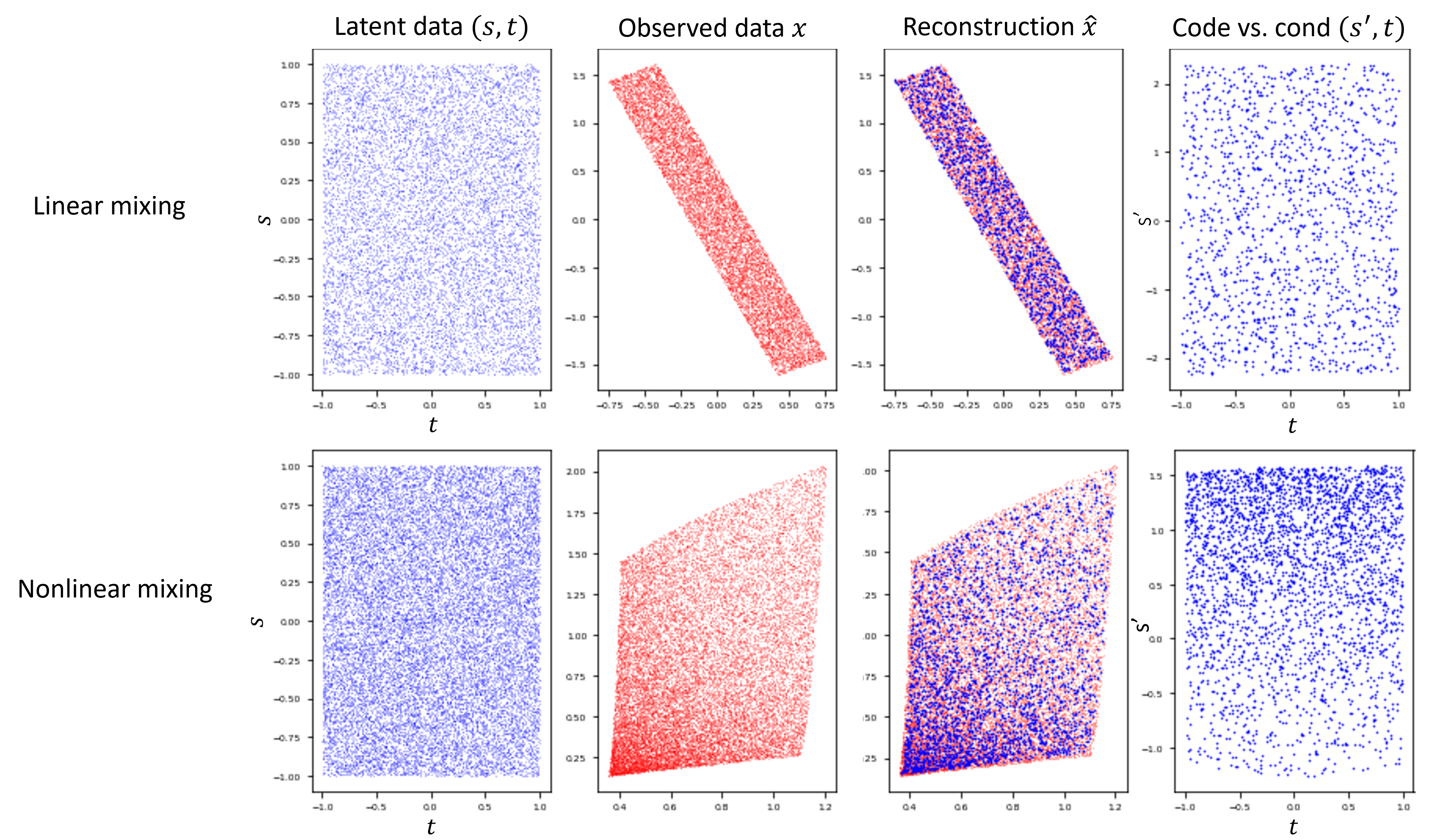}
    \caption{Analysis demonstration. The reconstruction plots show train data in {\color{red} red} and reconstructed test data in {\color{blue} blue}. The learned code $s'$ is a recovery of the latent factor $s$, up to entropy-preserving transformation (e.g., an arbitrary monotonic transformation). The approximate independence of the $S'$ and the condition $T$ can be recognized by noticing that the joint density of the code and the condition is an outer product of the marginal distributions. The $p$-values of $\chi^2$ independence test for the shown results are 0.75 (linear mixing) and 0.83 (nonlinear mixing)}
    \label{fig:2d}
\end{figure}

\subsection{Rotating Figures}

In this experiment, we use the setup shown in Figure~\ref{fig:bulldog_setup}, in which two figures, Bulldog and Bunny, rotate on discs. The rotation speeds are different and are not an integer multiple one of the other. 
The figures are recorded by two static cameras, where the right camera captures both Bunny and Bulldog, while the left camera captures only Bulldog.
The cameras operate simultaneously, so that in each pair of images Bulldog's position with respect to the table is the same. 
This dataset was curated in~\cite{lederman2018learning}.

We consider images from the right camera (which contain both figures) as the observed input $x$, and the images from the left camera (which only show Bulldog) as the condition $t$.
Note that the input can be considered as generated from two independent sources, namely the rotation angles of Bulldog and Bunny. 
The goal is to use $x$ and $t$ to recover the rotation angle $s$ of Bunny\footnote{A related work on this dataset was done in~\citep{shaham2018learning}, although there the goal was the opposite one, i.e., to recover the common information of the two views, which is the rotation angle of Bulldog.}.

Once training is done, we use the autoencoder to generate new images by manipulating Bulldog's rotation angle while preserving Bunny's. This is done by feeding $x$ to the encoder, obtaining an encoding $s'$, sampling an arbitrary condition $\tilde{t}$ and feeding $(s',\tilde{t})$ through the decoder. 
We use $\ell_1$ loss for reconstruction, and contrastive loss 
to train the discriminator. Namely, we train the discriminator to distinguish between (image, condition) tuples, which were shot at the same time, and tuples which were not. 
Figure~\ref{fig:rotating} shows an exemplifying result.
As can be seen, the learned model disentangles the rotation angles of Bunny and Bulldog and generates images in which Bunny's rotation angle is preserved while Bulldog's is manipulated.

\begin{figure*}[t]
  \centering
    \includegraphics[width=.71\textwidth]{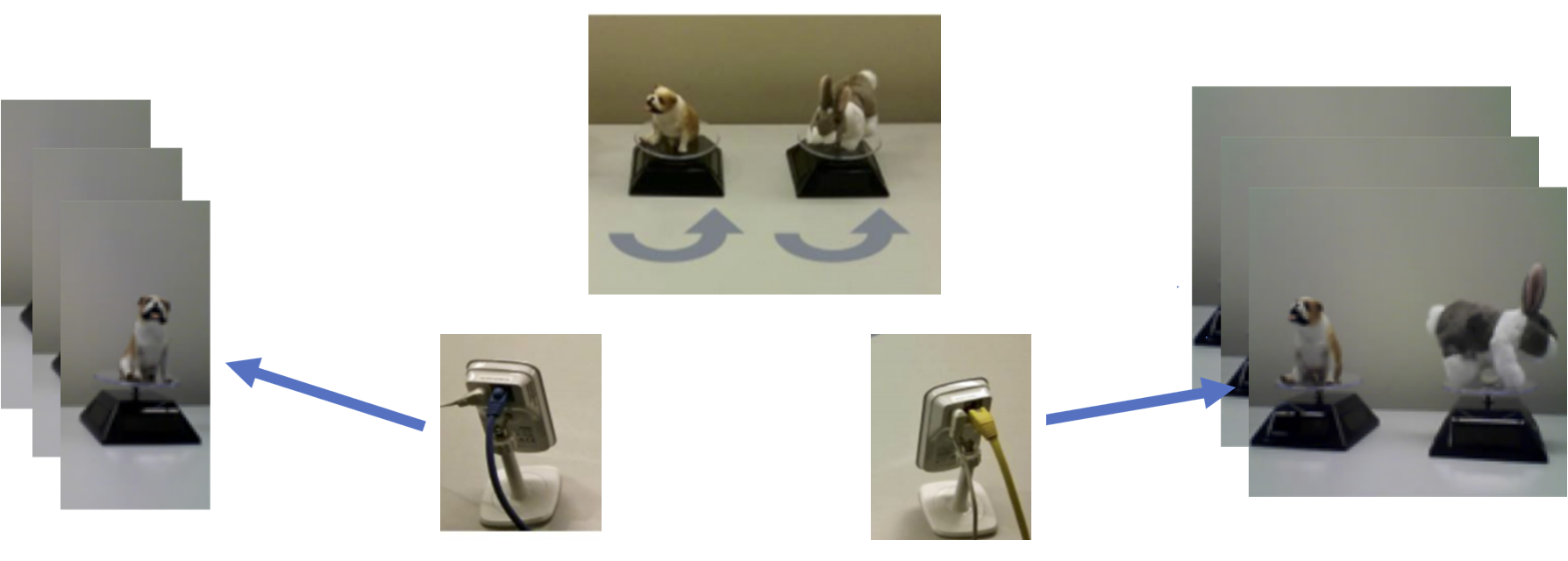}
    \caption{Experiment setup of the rotating figures. Bulldog and Bunny rotate in different speeds. The right camera captures both Bulldog and Bunny, while the left camera captures only Bulldog. Images from the right camera are considered as the input $x$, which is generated from two independent factors -- the rotation angles of the figures. Bulldog is considered as the condition $t$. The goal is to recover the rotation angle of Bunny, and to manipulate a given input image $x$ by plugging in a different condition than the one present in the image. The fact that $t$ and $x$ are captured from two different viewpoints prevents modification of the image simply by pasting Bulldog into $x$.} 
    \label{fig:bulldog_setup}
\end{figure*}

\begin{figure*}[t]
  \centering
    \includegraphics[width=.81\textwidth]{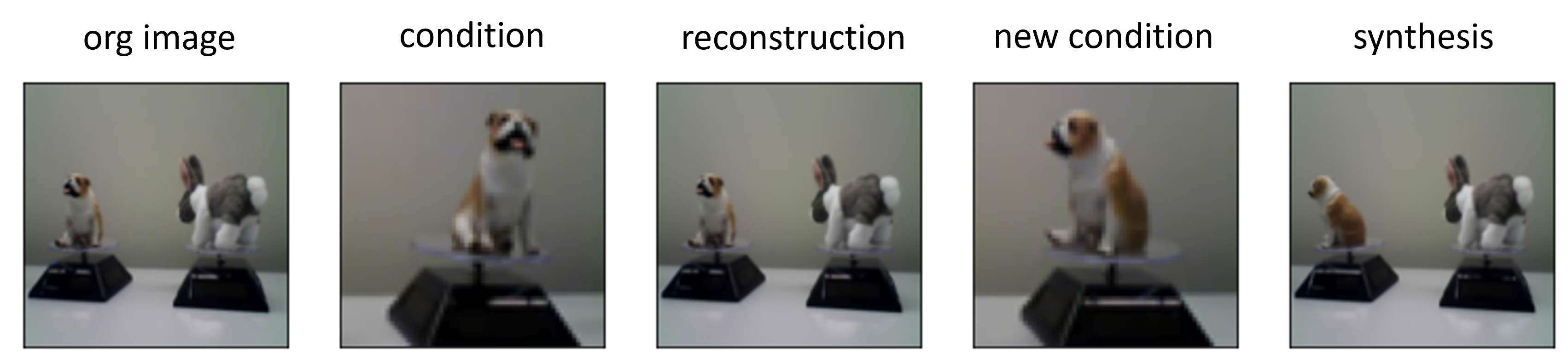} 
    \caption{The rotating figures experiment. From left to right: (i) the input image to the encoder $x$, (ii) the condition $t$ corresponding to Bulldog in $x$ as captured from the viewpoint of the left camera, (iii) the reconstruction $\hat{x}$ of $x$, (iv) a new condition plugged into the decoder, and (v) the resulting manipulated image.} 
    \label{fig:rotating}
\end{figure*}

\subsection{Voice Cloning}

To demonstrate the application of the proposed method to voice conversion, we run experiments on a non-parallel corpus, CSTR
VCTK Corpus~\citep{veaux2017cstr}, which consists of 109 English speakers
with several accents (e.g., English, American, Scottish, Irish,
Indian, etc.). 
In our experiments, we use a subset of the corpus
containing all the utterances for the first 30 speakers (p225-
p256, without p235 and p242). 

We construct the autoencoder to operate on Mel spectrograms using the speaker id as the condition. The AE architecture was based on Jasper~\citep{li2019jasper} blocks (specific details can be found in Appendix~\ref{app:tech}). The decoder uses a learnable lookup table with 64-dimensional embedding for each speaker. For the discriminator, we use the same architecture as in~\citep{mor2018universal}.
We use $\ell_1$ loss for reconstruction, and the discriminator is trained using domain confusion loss. 
Along with the reconstruction loss, in this experiment we also train a real/fake discriminator, applied to the output of the decoder.
To convert the decoder output to waveform, we use a pre-trained melgan~\cite{kumar2019melgan} vocoder.

Once the autoencoder is trained, we apply it to convert speech from any speaker to any other speaker. Some samples of the converted speech are given at \url{https://shaham-lab.github.io/disilv/}.
To evaluate the similarity of the converted voice and the target voice, we use MCD (Mel Cepstral Distortion) on a subset of the data containing parallel sentences of multiple speakers. Specifically, MCD computes the $\ell_1$ difference between dynamically time warped instance of the converted source voice and a parallel instance of the target voice, which is a common evaluation metric for voice cloning. We remark that the parallel data are used only for evaluation and not for training the model.
We use the script provided in~\citep{li2020espnet} to compute the MCD and compare our proposed approach to~\cite{ding2019group, polyak2019tts} and references therein, which are all considered to be strong baselines, trained on the VCTK dataset as well.
The results, shown in Table~\ref{tab:mcd}, demonstrate that our approach outperforms these strong baselines.

\begin{table}[t]
    \caption{Voice cloning results: Mel Cepstral Distortion (MCD) in terms of mean (std). PPG, PPG2 results are taken from~\cite{polyak2019tts}, VQ AVE and PPG GMM results are taken from~\cite{ding2019group}.}
    \label{tab:mcd}
    \begin{center}
    \begin{small}
    \begin{adjustbox}{width=\columnwidth,center}
    \begin{sc}
    \begin{tabular}{l | c c c c c c c}
        \toprule
         Method & TTS Skins~\cite{polyak2019tts} &  GLE~\cite{ding2019group} & VQ VAE & PPG GMM & PPG & PPG2 & Ours \\ 
         \midrule
         MCD &
         8.76  (1.72) &
         7.56 &
         8.43 &
         8.57 &
         9.19 (1.50) &
         9.18 (1.52) &
         \textbf {6.27} (1.44)\\
         \bottomrule
    \end{tabular}
    \end{sc}
    \end{adjustbox}
    \end{small}
    \end{center}
\end{table}

\subsection{Fetal ECG extraction}
\label{sec:ecg_main}
In this experiment, we demonstrate the applicability of the proposed approach to non-invasive fetal electrocardiogram (fECG) extraction, which facilitates the important task of monitoring the fetal cardiac activity during pregnancy and labor. 
%
Following commonly-used non-invasive methods, we consider extraction of the fECG based on two signals: (i) multi-channel abdominal ECG recordings, which consist of a mixture of the desired fECG and the masking maternal electrocardiogram (mECG), and (ii) thorax ECG recordings, which are assumed to contain only the mECG.
In analogy to our problem formulation (see Section~\ref{sec:setup}), the desired unobserved source $s$ denotes the fECG, the observed condition $t$ denotes the (thorax) mECG, and the input $x$ denotes the abdominal ECG.

{\bf{Dataset.}}
We consider the dataset from \cite{sulas2021non}, which is publicly available\footnote{\url{https://physionet.org/content/ninfea/1.0.0/}} on PhysioNet \cite{goldberger2000physiobank}.
This dataset was recently published and is part of an ongoing effort to establish a benchmark for non-invasive fECG extraction methods. 
%
The dataset consists of ECG recordings from $60$ subjects. Each recording consists of $n_a=24$ abdominal ECG channels and $n_t=3$ thorax ECG channels. In addition, it contains a pulse-wave doppler recording of the fetal heart that serves as a ground-truth. 
See Appendix~\ref{sec:appendix_ecg} for more details.

{\bf{Model training.}}
The input-condition pairs $(x_i, t_i)$ are time-segments of the abdominal ECG recordings ($x_i \in \mathbb{R}^{n_a \times n_T}$) and the thorax ECG recordings ($t_i \in \mathbb{R}^{n_t \times n_T}$), where the length of the time-segments is set to $n_T=2,000$ (4 seconds).
We train a separate model for each subject based on a collection of $n$ input-condition pairs $\{(x_i, t_i)\}_{i=1}^{n}$ of time-segments.

The encoder is based on a convolutional neural network (CNN), so that the obtained codes $s'_i=E(x_i) \in \mathbb{R}^{n_d \times n_T}$ are time-segments, where the dimension of the code is set to $n_d=5$.
For more details on the architecture, model training, and hyperparameters selection, see Appendix~\ref{sec:appendix_ecg}. 

We note that the training is performed in an unsupervised manner, i.e., we use the ground-truth doppler signal only for evaluation and not during training.

\begin{table*}[tb]
    \caption{fECG extraction results. In the leftmost column, we present $R_x$, and in the other columns we present $R_{s'}$ achieved by the different methods.}
    \label{tab:ResultsPerMethod}
    \vskip 0.15in
    \begin{center}
    \begin{small}
    \begin{sc}
    \scalebox{0.9}{
    \begin{tabular}{lcccccc}
    \toprule
     $\#$ of Subject & Input       & Ours& ADALINE     & ESN & LMS & RLS  \\ 
    \midrule
     Top 5  & 2.23 (3.23) & \textbf{6.86} (1.98) & 6.46 (2.54) & 1.99 (1.08) & 2.60 (1.60) & 1.03 (0.70) \\
     Top 10 & 1.20 (2.41) & \textbf{5.43} (2.02) & 4.22 (2.94) & 1.19 (1.10) & 1.56 (1.53) & 0.75 (0.56) \\
     Top 20 & 0.66 (1.75) & \textbf{3.53} (2.44) & 2.59 (2.63) & 0.71 (0.91) & 0.89 (1.26) & 0.51 (0.46) \\
     All    & 0.30 (1.17) & \textbf{1.84} (2.16) & 1.32 (2.08) & 0.36 (0.68) & 0.40 (0.94) & 0.26 (0.38) \\
     \bottomrule
    \end{tabular}
    }
    \end{sc}
    \end{small}
    \end{center}
\end{table*}

{\bf{Qualitative evaluation.}}
\label{par:qualitative}
In Figure~\ref{fig:raw_example} we present an example of an input-condition pair $(x_i, t_i)$ and the obtained code $s'_i=E(x_i, t_i)$.
\begin{figure}[t]
  \centering
    \includegraphics[width=.8\textwidth]{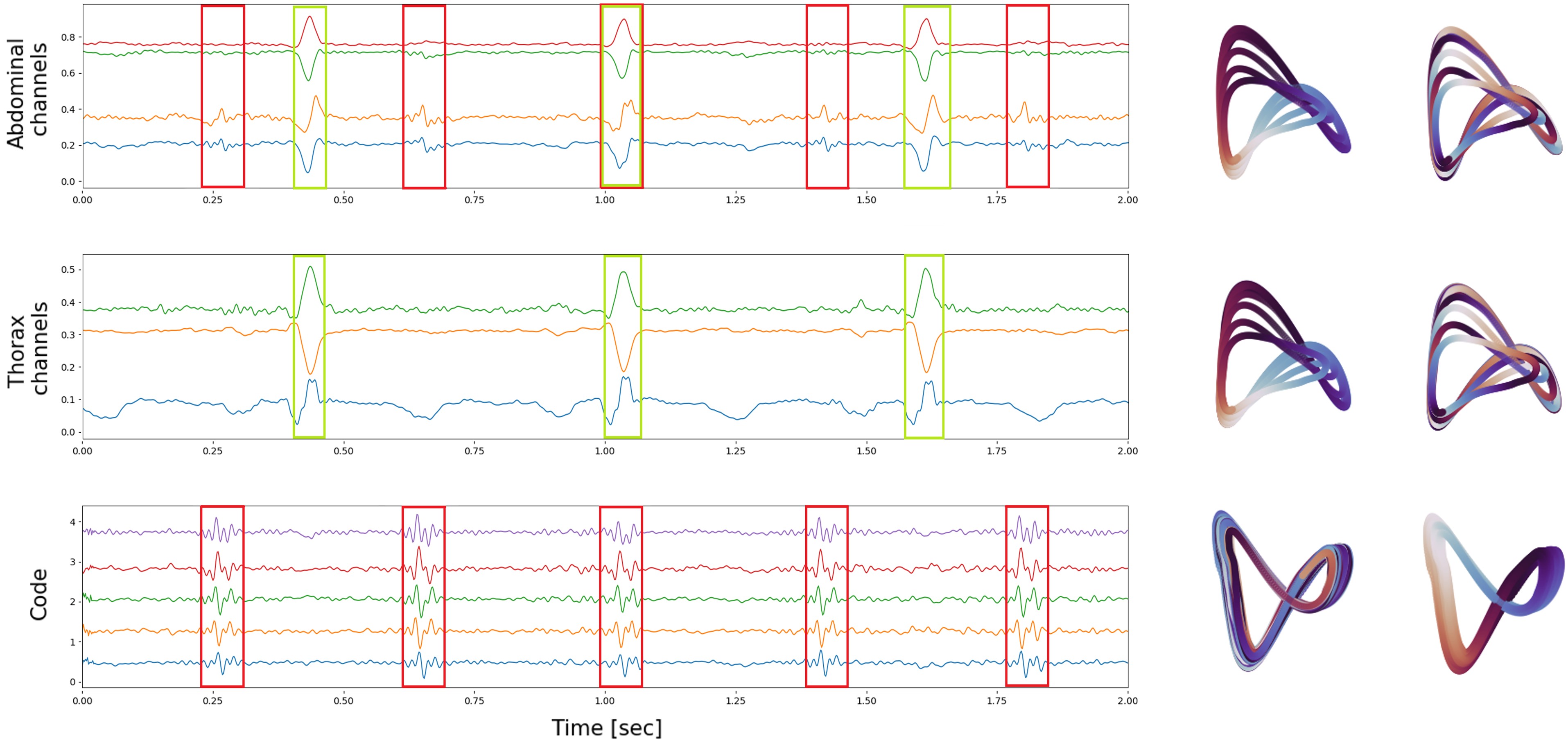}
    \caption{Example of an input-condition pair $(x_i, t_i)$ and the obtained code $s'_i$. The duration of the presented time segment is 2 sec. 
    Leftmost column: the raw channels. Middle column: the PCA embedding of the samples colored by the mECG. Rightmost column: the PCA embedding of the samples colored by the fECG. Top row: the abdominal channels $x_i$ (for brevity only 5 channels are presented). Middle row: the thorax channels $t_i$. Bottom row: the obtained code $s'_i$.}
    \label{fig:raw_example}
    \label{fig:time_trajectories}
    \vskip -0.2in
\end{figure}
We see that the abdominal channels consist of a mixture of the fECG and the mECG, where the fECG is significantly less dominant than the mECG and might even be completely absent from some of the channels. In addition, we see that the thorax channels are affected by the mECG only. Lastly, we see that the obtained code captures the fECG without any noticeable trace of the mECG.
In addition, we present the projections of $1,000$ sequentially-sampled inputs $x_i$ (abdominal channels), conditions $t_i$ (thorax channels), and their codes $s_i'=E(x_i)$ on their respective $3$ principal components.
We color the projected points by the periodicity of the mECG (middle column), computed from the thorax channels, and by the periodicity of the fECG (rightmost column), computed from the ground-truth doppler signals.
We see that the PCA of the abdominal and thorax channels are similar, implying that the mECG dominates the mixture.
In addition, we see that the color of the PCA of the abdominal and thorax channels according to the mECG (middle column) is similar and smooth, unlike the color by the fECG (rightmost column). In contrast, the PCA of the code is different (bottom row) and only the color by the fECG is smooth (rightmost column), indicating that the code captures the fECG without a significant trace of the mECG, as desired.

{\bf{Baselines.}}
%
We consider four baselines taken from a recent review \cite{kahankova2019review}. 
Specifically, we focus on methods that utilize reference thorax channels. 
The first two baselines are based on adaptive filtering, which is considered to be the traditional approach for fECG extraction: least mean squares (LMS) and recursive least squares (RLS). 
This approach was first introduced by \citet{widrow1975adaptive}, and it is still considered to be relevant in recent studies  \cite{martinek2017non,wu2013research,swarnalath2010novel}.
The third baseline is ADALINE~\cite{kahankova2018adaptive} which utilizes neural networks adaptable to the nonlinear time-varying properties of the ECG signal.
The fourth baseline is based on an echo state network (ESN) \cite{jaeger2001echo}. 


{\bf{Quantitative evaluation.}}
\label{par:quantitative}
To the best of our knowledge, there is no gold-standard nor definitive evaluation metrics for fECG extraction.
Here, based on the ground-truth doppler signal, we quantify the enhancement of the fECG and the suppression of the mECG as follows.

First, we compute the principal component of the input $x_i$.
Second, we compute the one-sided auto-correlation of the principal component, and denote it by $A_{x_i}$. Then, we quantify the average presence of the fECG in the inputs $x_i$ by computing: $\bar{A}_x^{(f)}=\frac{1}{n_s}\sum_{i=1}^{n_s} A_{x_i}(\tau_i^{(f)})$, where $\tau_i^{(f)}$ denotes the periods of the fECG obtained from the doppler signals, and $n_s$ denotes the number of time segments in the evaluated recording.
Similarly, we compute $\bar{A}_x^{(m)}=\frac{1}{n_s}\sum_{i=1}^{n_s} A_{x_i}(\tau_i^{(m)})$, where $\tau_i^{(m)}$ denotes the periods of the mECG obtained from the thorax signals.
Finally, to quantify the relative presences of the signals, we compute the ratio $R_x=\frac{\bar{A}_x^{(f)}}{\bar{A}_x^{(m)}}$. 
We apply the same procedure to the codes $s'_i$, resulting in $R_{s'}$. When evaluating the baselines, we consider the signals obtained after the mECG cancellation as the counterparts of our code signals. 

In \Cref{tab:ResultsPerMethod}, we present the average ratios in the input, code, and baselines over all the subjects (see Appendix~\ref{sec:appendix_ecg} for results per subject).
We note that not all the subjects in the dataset include a noticeable fECG in the abdominal recordings. Therefore, we present results over subsets of top $k$ subjects showing highest average ratios $R_x$.
We see that our method significantly enhances the fECG with respect to the mixture, and it outperforms the tested baselines.

\section{Conclusion}\label{sec:conclusion}
In this paper, we present an autoencoder-based approach for single independent component recovery. The considered problem consists of observed data (mixture) generated from two independent components: one observed and the other hidden that needs to be recovered. 
We theoretically show that this ICA-related recovery problem can be accurately solved, in the sense that the hidden component is recovered up to an entropy-preserving function, by an autoencoder equipped with a discriminator.
In addition, we demonstrate the relevance of the problem and the performance of the proposed solution on several tasks, involving image manipulation, voice cloning, and fetal ECG extraction.

Future research will address the limitations of this work. 
\cref{lemma} assumes zero generalization loss, i.e., convergence to a global minimum, which is often not achieved in practice. In future work we plan to generalize this statement and assume bounded generalization loss. 
Another future direction will address noise robustness.
The current setting does not consider any noise, either in $t$ or in $x$, or even in $f$. For example, in the presence of noise, perfect reconstruction is undesired, and other losses need to be developed and used.

\section*{Acknowledgments and Disclosure of Funding}

We thank the reviewers for their important comments and suggestions. The work of OK and RT was supported by the European Union’s Horizon 2020 research and innovation programme under grant agreement No. 802735-ERC-DIFFOP. RT acknowledges the support of the Schmidt Career Advancement Chair in AI.





\bibliography{references}
\bibliographystyle{apalike}

\clearpage

\newpage
\appendix

\section{Colored MNIST experiment}\label{app:mnist}
In this experiment, we used a colored version of the MNIST handwritten image dataset, obtained by converting the images to RGB format and coloring each digit with an arbitrary color from $\{\text{red, green, blue} \}$.

We ran two experiments on this dataset. In the first one we considered the color as the condition. This setup perfectly meets the model assumptions, as each colored image was generated by choosing an arbitrary color at random ($t$) and coloring the original grayscale image ($s$).
In the second experiment we set the condition to be the digit label. This corresponds to a data generation process in which handwriting characteristics (e.g., line thickness, orientation) and color are independent of the digit label. While the color was indeed chosen independently of any other factor, independence of the handwriting characteristics and the digit label is debatable, as for example, orientation may depend on the specific digits (e.g., '1' is often written in a tilted fashion, while this is not the case for other digits). 

The condition $t$ was incorporated into decoder by modulating the feature maps before each convolutional layer.
The discriminator was trained using domain confusion loss. As a reconstruction term we used (pixel-wise) binary cross entropy.

Once the autoencoder was trained, we used it to manipulate the images by plugging to the decoder arbitrary condition and generating new data. 
Figure~\ref{fig:mnist} shows examples of reconstructions and manipulation for both experiments.
In the left panel (showing the results for condition=color) we can see that very high quality reconstruction and conversion were achieved, implying that the learned code did not contain color information, while preserving most of the information of the grayscale image, as desired.
The right panel (showing results for condition=digit label) displays similar results, although of somewhat worse conversion quality, as this setting does not fully fit the assumptions taken in this work. Yet, the code clearly captures most dominant characteristics of the handwriting.

\begin{figure*}[h]
  \centering
    \includegraphics[width=.45\textwidth]{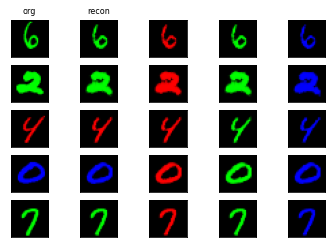}  
    \includegraphics[width=.475\textwidth]{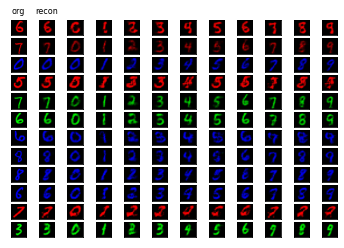}  
    \caption{The colored MNIST experiment, using the color (left) and digit label (right) as condition. In each of the plots, the leftmost column show the input $x$ to the encoder, and the next column shows the reconstruction. The remaining columns show conversion to each of the condition classes.}
    \label{fig:mnist}
\end{figure*}

\section{Image Domain Conversion}\label{app:cycle}
In this experiment we apply the proposed approach to some of the datasets introduced in~\cite{zhu2017unpaired}.
Here the condition is the domain (e.g., orange / apple). We use a combination of $\ell_1$ and SSIM loss for reconstruction and domain confusion for independence. In addition to reconstruction loss, we also use a GAN-like real/fake discriminator to slightly improve perceptual loss~\citep{blau2018perception}. Some results are shown in Figure~\ref{fig:cycle}.
While an interested reader might wonder why oranges are converted to red oranges rather than apples, we remark that as much as the condition specifies the type of fruit (orange / apple) throughout this dataset it also specifies its color (orange / red), which, by Ockham's razor, is a somewhat simpler description of the separation between the two domains. Therefore the image manipulation made by the model can be interpreted as a domain conversion.




\begin{figure}[h]
  \centering
    \includegraphics[width=.95\columnwidth]{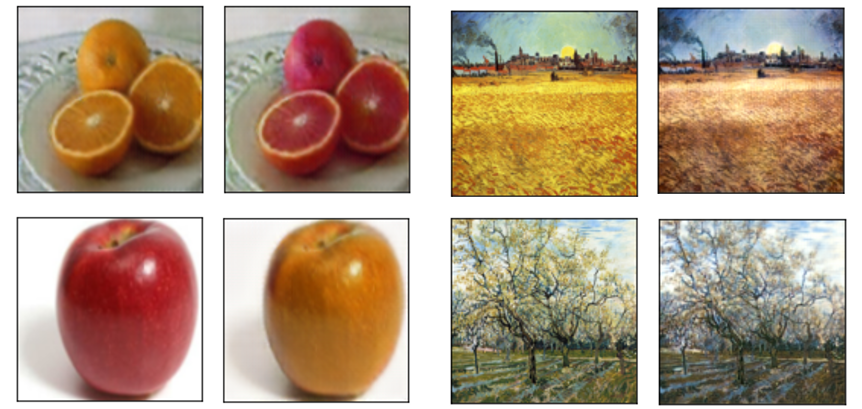}  
    \caption{Image Domain Conversion experiment. Left: Conversion results on the oranges2apples dataset. Right: Conversion from Cezanne to photo (up) and Van Goch to photo (down).} 
    \label{fig:cycle}
\end{figure}

\section{More Details on the Experiments}\label{app:tech}

\subsection{2D demonstration}
In this experiment we used MLP architectures for all networks, where each of the encoder, decoder and discriminator consisted of three hidden layers, each of size 64.
Identity and softplus activations were used for the linear and nonlinear mixing experiments, respectively.
The discriminator was regularized using r1 loss, computed every eight steps. 
The model was trained for 100 epochs on a dataset of 15,000 points.
To balance between the reconstruction and independence terms we used $\lambda=0.05$. The autoencoder optimizer was called every 5th step.
A Jupyter notebook running this demo is available at~\url{https://github.com/shaham-lab/disilv/blob/main/IMAGE/2D_demo.ipynb}.

\subsection{MNIST Experiment}
In this experiment each of the encoder and decoder consisted of three convolutional layers, of 16, 32 and 64 kernels and ReLU activations. The discriminator had a MLP architecture with 128 units in each layer.
The condition was incorporated into the decoder via modulation, utilizing a learnable lookup table of class embeddings of dimension 64.
In this experiment we also used an additional discriminator, trained in a GAN fashion to distinguish between real and generated images. During training this discriminator was also trained on swapped data, i.e., codes that were fed to the decoder with a random condition. This discriminator has three convolution layers and was trained with standard non-saturating GAN loss. 
The system was trained for 50 epochs, with $\lambda=0.01 $ and the real/fake discriminator loss term was added to the $\mathcal{L}_\text{AE}$ in~\eqref{eq:objecties} with coefficient of 0.001. 
A Jupyter notebook running this demo is available at~\url{https://github.com/shaham-lab/disilv/blob/main/IMAGE/colored_mnist_demo.ipynb}.

\subsection{Rotating Figures}
The images $x$ were of size $128\times128$ pixels and the conditions were of size $64\times$64. 
In the encoder images were passed through two downsampling blocks and two residual blocks. ResNet encoding models were applied to the condition and the images (separately), before the their feature maps were concatenated and passed through several additional ResNet blocks.
In the decoder conditions were downsampled once and passed through two residual blocks, before being concatenated to the codes and fed through two more residual blocks.
The decoder and discriminator have a similar architecture. We use $\ell_1$ as reconstruction loss.
The system was trained for 120 epochs on a dataset containing 10,000 instances, using $\lambda=1.05$, and the autoencoder was trained every 5th step.

\subsection{Image Domain Conversion}
In this experiment the encoder and decoder's architectures were inspired by the cycleGAN~\cite{zhu2017unpaired} ResNet generator architecture, splitting the generator to encoder and decoder. The decoder was enlarged with modulation layer before each convolutional layer. The class embeddings were of size 512.
As in the MNIST experiment, GAN-like real / fake discriminator was used here as well.
The system was trained for 200 epochs, on the datasets downloaded from the cyclegan official repository\footnote{\url{https://github.com/junyanz/pytorch-CycleGAN-and-pix2pix/blob/master/datasets/download_cyclegan_dataset.sh}}.
We used $\lambda_{ind}=\lambda_{rf} =0.1$ for both discriminators.
The autoencoder was trained every 5th step.
A Jupyter notebook running this demo is available at~\url{https://github.com/shaham-lab/disilv/blob/main/IMAGE/orange_apples_demo.ipynb}.

\subsection{Voice Conversion}
The encoder receives mel-filterbank features calculated from windows of 1024 samples with a 256 samples overlap, and outputs a latent representation of the source speech. The network is constructed from downsampling by factor 2 1D convolution layer with ReLU followed by 30 Jasper~\citep{li2019jasper} blocks, where each sub-block applies a 1D convolution, batch norm, ReLU, and dropout. All sub-blocks in a block have the same number of output channels which we set to 256. 
The decoder is also a convolutional neural network which receives as input the latent representation produced by encoder and the target speaker id as the condition. The condition was then mapped to a learnable embedding in \(\mathbb{R}^{64}\) concatenated to the encoder output by repeating it along the temporal dimension. The concatenated condition is passed through 1D convolution layer with stride 1 followed by a leaky-ReLU activation with a leakiness of 0.2 and 1D transposed convolution with stride 2 for upsampling to the original time dimension. 
The discriminator was trained using domain confusion loss. 
The system was trained for 2000 epochs, which took 8 days on a simple GTX 1080 GPU. We used $\lambda=1$, both optimizers were called every training step.

\subsubsection{Possible negative societal impact}

The ability to synthesize realistic audio using voice conversion can be exploited for malicious purposes, e.g., for voice spoofing, fake news, fraud, phishing, and harassment, to name but a few. Therefore, voice conversion should be deployed subject to ethical concerns. Specifically, the converted speech should be presented with an appropriate disclosure indicating that the synthetic speech was generated using a voice conversion technique. In addition, the conversion should be conditioned by obtaining consent from all the affiliated parties (both the source speaker and the target speaker). We acknowledge that this discussion is limited; for a broader discussion we refer the readers to \cite{ai2019high}.


\subsection{ECG Analysis}
\label{sec:appendix_ecg}
\subsubsection{Dataset}
\label{subsec:dataset_appendix}
The dataset consists of 60 entries from 39 voluntary pregnant women.
Each entry is composed of recordings from 27 ECG channels  and a synchronized recording from a trans-abdominal pulse-wave doppler (PWD) of the fetal's heart. The recordings' lengths vary from 7.5 seconds to 119.8 seconds (with average length of 30.6 seconds ± 20.6 seconds).
The ECG recordings were sampled by the TMSi Porti7 system with a frequency-sampling rate of 2KHz . The PWD recordings were acquired using the Philips iE33 ultrasound machine. The obtained video was converted into a 1D time series using a processing scheme based on envelope detection. The code for this processing scheme was provided as a Matlab-script by the authors of \cite{sulas2021non}. For convenience, we uploaded the obtained 1D time series after applying the provided Matlab-script, and it is available at~\url{https://github.com/shaham-lab/disilv/tree/main/ECG/Data}.


\subsubsection{Pre-proecssing and model implementation}
\label{subsec:model_appendix}
In the following we provide a detailed description of the pre-processing steps and the implementation of the model. For convenience, all the parameters and hyperparameters are summarized in \Cref{tab:ParamsList}.
The recording of subjects 1-20 were used for hyperparameters selection. These subjects were discarded in the objective evaluation reported in \Cref{tab:ResultsPerMethod}.

\paragraph{Pre-processing.}  
The raw ECG recordings were filtered by a median filter with a window length of $n_m=2,048$ (1 second) to remove the baseline drift. In addition, we apply a notch filter to remove the 50Hz powerline noise and a low-pass filter with a cut-off frequency of $F_c=125$Hz. Finally, we downsample the signal to frequency-sampling rate of $F_s=500$Hz.
The doppler signal was pre-processed using the script provided by the dataset's owners. No further operations were performed.

\paragraph{Implementation details.}
The encoder module $E(X)$ is implemented using a convolutional neural network (CNN): $\mathbb{R}^{n_a \times n_T} \rightarrow \mathbb{R}^{n_d \times n_T}$.
This choice of architecture is inspired by the architecture proposed by \cite{yildirim2018efficient} for the benefit of ECG compression, and it is described in detail in \Cref{tab:ArchEncoder}.

The implementation of the decoder module $D(S',T)$ is based on a deconvolutional neural network (dCNN): $\mathbb{R}^{(n_d+n_t) \times n_T} \rightarrow \mathbb{R}^{n_a \times n_T}$. This decoder is applied to the concatenation of the code signal and the thorax signal, where the concatenation is along the first dimension. The exact architecture is described in details in \Cref{tab:ArchDecoder}.

The discriminator is implemented via an additional CNN $g(T): \mathbb{R}^{n_t \times n_T} \rightarrow \mathbb{R}^{n_d \times n_T}$. $g(T)$ shares the same architecture as $E(X)$, except the first convolutional layer which has $n_t$ input channels rather than $n_a$.
The independence term is given by $\mathcal{I}( T,S') = \text{Ind}(g(T),S')$, where $\text{Ind}(x,y)$ is a scale-invariant version of the MSE loss function: $\text{Ind}(x,y)= \big|\big|\frac{|x|_e}{||x||_F}- \frac{|y|_e}{||y||_F} \big|\big|_F$, and $|\cdot|_e$ denotes an element-wise operation of absolute value.
We remark that other possibilities can be considered as well.

Lastly, the reconstruction module is simply implemented via the standard MSE loss: $\text{Recon}(x,y)= \big|\big| x -y  \big|\big|_F$.

\subsubsection{Training process}
We train a model for each subject. 
The training data is a collection of $n$ input-condition pairs $\{(x_i, t_i)\}_{i=1}^{n}$, where each input-condition pair $(x_i, t_i)$ is a time-segment that was selected from the ECG recording at a randomly drawn offset and $n$ is a hyperparameter indicating the number of randomly drawn training examples.
%
We use two optimizers that operate in an interleaved (adverserial-like) fashion. Specifically, for each update step of the second optimizer we perform $\beta$ update steps of the first optimizer, where $\beta=5$ is a hyperparameter. 
The first optimizer updates $g(T)$ and aims to maximize the dependency between the condition and the code. 
The second optimizer updates $E(X)$ and $D(S',T)$ and has two objectives -- minimizing the reconstruction loss while preventing the first optimizer from succeeding to maximize the dependency loss. Hence, encouraging the optimization process to converge to a ``condition-free" code. The proportion between these two objectives is controlled by the hyperparameter $\lambda=0.01$ which multiplies the second objective term.
The losses obtained by the two optimizers are denoted by $\mathcal{L}_\text{disc}$ and $\mathcal{L}_\text{AE}$ in \cref{eq:objecties}.

Both optimizers are implemented using the Adam algorithm \cite{kingma2014adam} with a fixed learning rate of $lr=10^{-4}$, $\beta=(0.9, 0.999)$ and a bach-size of $b=32$.

\begin{table}[t]
    \centering
    \caption{List of parameters and hyperparameters used in the ECG analysis. Parameters are listed in the upper part of the table, while hyperparameters are listed in the lower part of the table.}
    \label{tab:ParamsList}
    \begin{tabular}{|c|c|c|}
        \hline 
        \textbf{Notation} & \textbf{Description} & \textbf{Value}\\
        \hline \hline 
        $n_a$ & Number of abdominal channels & 24 \\ 
        $n_t$ & Number of thorax channels & 3 \\ 
        \hline
        $n_m$ & Window length of the median-filter& $2,000$ \\
        $F_c$ & Cut-off frequency & $5 \cdot 10^4$ \\
        $F_s$ & Frequency-sample & $500$ \\
        $n_d$ & Dimensioanlity of the code & 5 \\ 
        $n$ & Number  condition-pairs for training & $5 \cdot 10^4$ \\ 
        $b$ & Batch size & 32 \\ 
        $lr$ & Learning rate & $10^{-4}$ \\ 
        $\lambda$ & Objective independency factor & 0.01 \\ 
        $\beta$ & Interleaving independency factor & 5 \\
        \hline 
    \end{tabular}
\end{table}

\begin{table}[t]
    \centering
    \setlength{\tabcolsep}{1pt}
    \caption{Layers consisting the encoder $E(X)$ in the ECG analysis.}
    \label{tab:ArchEncoder}
    \begin{tabular}{|c|c|c|c|c|}
        \hline 
        \textbf{No} & \textbf{\specialcell{Layer\\ name}} & \textbf{\specialcell{No. of filters\\  $\times$ kernel size}} & \textbf{\specialcell{Activation \\function}} & \textbf{\specialcell{Output\\ size}}\\
        \hline \hline 
        1 & 1D Conv & $8\times 3$ & Tanh & $2000\times8$ \\ 
        2 & 1D Conv & $8\times 5$ & Tanh & $2000\times8$ \\ 
        3 & Batch Norm.  & - & - & $2000\times8$ \\ 
        4 & 1D Conv & $8\times 3$ & Tanh & $2000\times8$ \\ 
        5 & Batch Norm.  & - & - & $2000\times8$ \\ 
        6 & 1D Conv & $8\times 11$ & Tanh & $2000\times8$ \\ 
        7 & 1D Conv & $8\times 13$ & Tanh & $2000\times8$ \\ 
        8 & 1D Conv & $n_d\times3$ & Tanh & $2000\times n_d$ \\ 
        \hline 
    \end{tabular}
\end{table}

\begin{table}[t]
    \centering
    \setlength{\tabcolsep}{1pt}
    \caption{Layers consisting the decoder $D(S',T)$ in the ECG analysis. ``T.Conv" denotes a transposed convolution layer.}
    \label{tab:ArchDecoder}
    \begin{tabular}{|c|c|c|c|c|}
        \hline 
        \textbf{No} & \textbf{\specialcell{Layer\\ name}} & \textbf{\specialcell{No. of filters\\  $\times$ kernel size}} & \textbf{\specialcell{Activation \\function}} & \textbf{\specialcell{Output\\ size}}\\
        \hline \hline 
        1 & 1D T.Conv & $8\times 3$ & Tanh & $2000\times8$ \\ 
        2 & 1D T.Conv & $8\times 13$ & Tanh & $2000\times8$ \\ 
        3 & 1D T.Conv & $8\times 3$ & Tanh & $2000\times8$ \\ 
        4& 1D T.Conv & $8\times 5$ & Tanh & $2000\times8$ \\ 
        5 & 1D T.Conv & $n_a\times 3$ & Tanh & $2000\times n_a$ \\ 
    \hline 
    \end{tabular}
\end{table}

\subsubsection{Qualitative evaluation}
\label{subsubsec:qualitative_appendix}
Here, we describe in detail the procedure presented in \Cref{par:qualitative} in the paper.
First, we produce a code $s'_i$ for each input-condition pair $(x_i, t_i)$.
Then, we column-stack each matrix in the set  $\{s'_i\}_{i=1}^{1,000}$ and project the obtained set of vectors to a 3D space using principal component analysis (PCA). We repeat the same procedure for $\{x_i\}_{i=1}^{1,000}$.
We color the projected points in two manners: according to the fECG signal and according to the mECG signal.
The color of the $i$th sample representing the fECG (mECG) signal is computed as follows: $\{\text{mod}(i,\tau^{(f)})\}_{i=1}^{1,000}$ ($\{\text{mod}(i,\tau^{(m)})\}_{i=1}^{1,000}$), where $\tau^{(f)}$ ($\tau^{(m)}$) denotes the period of the fECG (mECG) obtained from the doppler signal (thorax recordings). 

\subsubsection{Additional results}
\label{subsec:objectiv_appendix}
The results presented in \Cref{par:quantitative} are averaged over subsets of subjects. In \Cref{tab:ResultsPerSubject} we present the results for each subject.

\begin{table}[t]
    \centering
    \tiny
    \caption{fECG extraction results for each subject. }
    \label{tab:ResultsPerSubject}
    \begin{tabular}{|c|c|c|c|c|c|c|c|}
        \hline 
        Subject & Input        & Ours         & ADALINE      & ESN& LMS& RLS   \\ 
        \hline  \hline 
         1 & 0.11 (0.10)  & 0.74 (1.63) & 0.51 (0.51)  & 0.05 (0.16)  & 0.00 (0.00)  & 0.20 (0.16)  \\
         2 & 0.13 (0.12)  & 0.51 (1.90) & 0.17 (0.58)  & 0.07 (0.12)  & 0.00 (0.00)  & -0.02 (0.10) \\
         3 & 0.10 (0.13)  & 0.49 (0.90) & 0.56 (0.59)  & 0.45 (0.33)  & 0.47 (0.33)  & 0.04 (0.14)  \\
         4 & 0.15 (0.22)  & 0.69 (0.68) & 0.10 (0.70)  & 0.15 (0.50)  & 0.07 (0.15)  & 0.08 (0.22)  \\
         5 & 7.89 (0.12)  & 0.74 (1.04) & -0.01 (0.69) & 0.22 (0.36)  & 0.07 (0.72)  & 0.04 (0.13)  \\
         6 & 0.11 (0.09)  & 0.58 (1.14) & 0.39 (0.54)  & 0.14 (0.22)  & 0.34 (0.45)  & 0.10 (0.15)  \\
         7 & 0.10 (0.11)  & 0.07 (0.99) & 1.04 (0.84)  & 0.32 (0.28)  & 0.60 (0.41)  & 0.09 (0.10)  \\
         8 & -0.00 (0.26) & 0.94 (0.69) & 0.26 (0.48)  & 0.18 (0.79)  & 0.00 (0.00)  & 0.17 (0.28)  \\
        10 & 0.10 (0.10)  & 4.41 (1.48) & 1.46 (2.01)  & 0.22 (0.25)  & 0.01 (0.02)  & 0.03 (0.12)  \\
        11 & 0.09 (0.12)  & 0.56 (1.73) & 0.68 (0.63)  & 0.11 (0.17)  & 0.00 (0.00)  & 0.15 (0.22)  \\
        13 & -0.05 (0.08) & 1.35 (0.50) & 1.08 (1.23)  & 0.03 (0.50)  & 0.00 (0.01)  & 0.58 (0.30)  \\
        14 & -0.03 (0.05) & 3.60 (3.18) & 8.82 (4.39)  & 1.85 (1.59)  & 3.17 (2.50)  & 0.90 (0.54)  \\
        15 & 0.13 (0.39)  & 0.52 (0.90) & 0.36 (0.67)  & 0.29 (1.04)  & 0.14 (0.82)  & 0.28 (0.22)  \\
        16 & 0.04 (0.20)  & 0.80 (1.57) & 0.20 (0.61)  & 0.09 (0.51)  & 0.00 (0.00)  & -0.03 (0.13) \\
        17 & 0.03 (0.23)  & 1.31 (0.45) & 0.29 (0.50)  & -0.06 (0.32) & 0.00 (0.00)  & 0.58 (0.54)  \\
        18 & 0.03 (0.05)  & 0.50 (0.82) & 0.58 (1.34)  & 0.04 (0.14)  & -0.01 (0.02) & 0.12 (0.19)  \\
        19 & 0.11 (0.08)  & 0.86 (0.94) & 1.03 (0.99)  & 0.19 (0.23)  & 0.00 (0.00)  & 0.22 (0.14)  \\
        20 & 0.17 (0.15)  & 0.86 (0.94) & 0.24 (0.52)  & 0.14 (0.38)  & 0.45 (0.54)  & 0.28 (0.39)  \\
        21 & 0.23 (0.15)  & 0.40 (0.45) & 0.28 (0.45)  & 0.24 (0.34)  & 0.36 (0.45)  & 0.44 (0.24)  \\
        22 & 1.17 (0.59)  & 1.41 (1.10) & 1.00 (1.65)  & 1.14 (0.69)  & 3.56 (2.09)  & 2.28 (2.57)  \\
        23 & 0.16 (0.24)  & 0.82 (1.31) & 0.17 (1.11)  & 0.23 (1.40)  & 0.01 (0.03)  & 0.03 (0.08)  \\
        24 & 0.05 (0.11)  & 0.92 (1.05) & 0.19 (0.74)  & 0.48 (0.65)  & 0.53 (0.35)  & 0.28 (0.25)  \\
        25 & -0.05 (0.08) & 0.36 (0.38) & 0.81 (0.60)  & 0.30 (0.36)  & 0.40 (0.53)  & -0.05 (0.20) \\
        26 & 0.09 (0.66)  & 0.67 (1.08) & 0.56 (0.69)  & 0.23 (0.40)  & 0.17 (0.34)  & 0.11 (0.14)  \\
        27 & 0.02 (0.12)  & 0.46 (0.71) & 0.34 (0.43)  & 0.10 (0.32)  & 0.20 (0.40)  & 0.09 (0.19)  \\
        28 & -0.01 (0.09) & 1.88 (1.80) & 1.55 (1.49)  & 0.02 (0.18)  & 0.00 (0.00)  & 0.15 (0.12)  \\
        29 & -0.07 (0.04) & 0.83 (0.90) & 1.09 (0.97)  & -0.10 (0.06) & 0.24 (0.04)  & 0.04 (0.18)  \\
        30 & 0.09 (0.11)  & 8.14 (3.21) & 0.95 (0.99)  & 0.43 (0.55)  & 0.83 (0.97)  & 0.34 (0.30)  \\
        31 & 0.10 (0.13)  & 1.06 (0.60) & 0.33 (0.33)  & 0.05 (0.24)  & 0.01 (0.00)  & 0.01 (0.20)  \\
        32 & 0.03 (0.22)  & 1.27 (1.04) & 0.25 (0.24)  & -0.06 (0.22) & 0.00 (0.00)  & -0.05 (0.23) \\
        33 & 0.03 (0.20)  & 0.30 (0.35) & 0.35 (0.24)  & 0.02 (0.14)  & 0.00 (0.00)  & -0.09 (0.13) \\
        37 & 0.09 (0.18)  & 0.82 (0.70) & 0.02 (0.72)  & 0.21 (0.40)  & 0.00 (0.00)  & 0.15 (0.11)  \\
        39 & 0.13 (0.17)  & 9.57 (3.24) & 9.16 (1.67)  & 0.12 (0.17)  & 0.05 (0.02)  & 0.48 (0.24)  \\
        41 & 0.08 (0.08)  & 5.66 (7.76) & 4.28 (3.49)  & 0.13 (0.34)  & 0.00 (0.00)  & 0.32 (0.51)  \\
        43 & -0.07 (0.04) & 1.21 (0.80) & 1.61 (1.00)  & 0.29 (0.38)  & 0.56 (0.37)  & 0.35 (0.26)  \\
        44 & -0.06 (0.08) & 1.42 (1.03) & 1.09 (1.36)  & 0.00 (0.80)  & 0.09 (0.09)  & 0.27 (0.19)  \\
        45 & 1.69 (0.43)  & 4.30 (4.73) & 0.56 (1.57)  & 0.95 (1.81)  & 1.02 (0.68)  & 0.77 (0.77)  \\
        46 & 0.18 (0.11)  & 3.47 (2.77) & 6.45 (5.37)  & 3.62 (1.68)  & 4.45 (4.05)  & 0.17 (0.15)  \\
        48 & 0.06 (0.21)  & 3.40 (1.68) & 3.12 (2.79)  & 2.37 (1.44)  & 0.03 (0.02)  & 0.24 (0.55)  \\
        50 & 0.13 (0.27)  & 0.50 (0.79) & 0.52 (0.65)  & 0.21 (0.32)  & 0.00 (0.00)  & 0.01 (0.08)  \\
        51 & 0.13 (0.24)  & 0.24 (0.48) & 3.59 (2.63)  & 0.13 (0.27)  & 0.00 (0.00)  & 0.65 (0.26)  \\
        55 & 0.15 (0.16)  & 4.65 (5.32) & 2.15 (1.78)  & 0.10 (0.18)  & 0.01 (0.00)  & 0.04 (0.15)  \\
        56 & 0.14 (0.13)  & 4.24 (1.60) & 0.77 (0.66)  & 0.04 (0.23)  & 0.04 (0.11)  & 0.08 (0.15)  \\
        58 & 0.09 (0.31)  & 6.29 (4.99) & -0.01 (0.67) & 0.15 (0.53)  & 0.17 (0.62)  & 0.45 (0.33)  \\
        59 & 0.09 (0.09)  & 0.79 (0.88) & 0.34 (0.71)  & 0.14 (0.23)  & 0.00 (0.00)  & 0.35 (0.17)  \\
        \hline
    \end{tabular}
\end{table}

\subsubsection{Additional comments}

We looked for a dataset that contains: (1) abdominal recordings, (2) chest (thorax) recordings, and (3) ground-truth (GT) that can be used for quantitative evaluation. 
For this purpose, we reviewed all the datasets from Section 7 in~\cite{behar2016practical}:
\begin{itemize}
\item DDB and NIFECGDB:  these two datasets do not have GT.
\item ADFECGDB:  in this dataset, there are no chest recordings.
\item PCDB and ADFECGDB:  these datasets do not include chest recordings.
\item FECGSYNDB:  seemingly, this dataset admits all the requirements. However, it is a synthetic dataset, and we were looking for a real-world dataset.
\end{itemize}
The only real-world datasets that include chest recordings are DDB and NIFECGDB. DDB includes only a single recording of a single subject, and therefore, we focused on NIFECGDB. This dataset was used to objectively evaluate ESN (one of the considered baselines) in [2] using expert annotations. We contacted the authors of~\cite{behar2013echo} and asked them to share their annotations. Unfortunately, the authors could not share the annotations, but they kindly referred us to use the NIFEADB dataset~\cite{pani2020ninfea}. This recently published dataset fits our purposes, and therefore, we chose to use it in our experiments.

We note that the GT in NIFEADB is not given as expert annotations of the fetal QRS complexes as in ADFECGDB, PCDB, and the proprietary annotations from NIFECGDB. In NIFEADB, the GT is extracted from the Doppler signal of the fetal heart, making the use of commonly accepted evaluation metrics proposed in~\cite{behar2016practical} impossible for the following reasons:
\begin{itemize}
\item Fetal HR measures (listed in the first part of table 5 in~\cite{behar2016practical}: Se,PPV, F1,etc) -- these measures assume that the GT includes the locations of the fetal QRS complexes.
\item Morphological analysis (listed in the second part of table 5 in~\cite{behar2016practical}: SNR,FQT,TQRS) -- these measures assume that the GT includes the fECG signal, which is available only in simulations and in invasive procedures.
\end{itemize}
Therefore, we used a quantitative evaluation metric that quantifies the enhancement of the fECG and the suppression of the mECG based on the doppler GT.

We remark that the lack of publicly-available reference datasets, which could be used to benchmark different algorithms, was the main motivation for the curation of the NIFEADB (see the abstract in~\cite{pani2020ninfea}). However, establishing such benchmarks and gold standards is still an ongoing effort, and, to the best of our knowledge, there is no definitive gold standard criterion available to date.

\end{document}